\def\figref#1{figure~\ref{#1}}
\def\eqref#1{equation~\ref{#1}}
\def\1{\bm{1}}
\def\vb{{\bm{b}}}
\def\vr{{\bm{r}}}
\def\vw{{\bm{w}}}
\def\vy{{\bm{y}}}
\def\mV{{\bm{V}}}
\def\mW{{\bm{W}}}
\def\mX{{\bm{X}}}
\DeclareMathAlphabet{\mathsfit}{\encodingdefault}{\sfdefault}{m}{sl}
\SetMathAlphabet{\mathsfit}{bold}{\encodingdefault}{\sfdefault}{bx}{n}
\def\sB{{\mathbb{B}}}
\def\sN{{\mathbb{N}}}
\def\sR{{\mathbb{R}}}
\def\sS{{\mathbb{S}}}
\newcommand{\etal}{\textit{et al.}}
\begin{document}

\title{Provable Defense against Privacy Leakage in Federated Learning from Representation Perspective}

\author{Jingwei Sun, Ang Li, Binghui Wang, Huanrui Yang, Hai Li, Yiran Chen\\
\textit{Department of Electrical and Computer Engineering, Duke University}\\
\{jingwei.sun, ang.li630, binghui.wang, huanrui.yang, hai.li, yiran.chen\}@duke.edu}

\maketitle

\begin{abstract}
   Federated learning (FL) is a popular distributed learning framework that can reduce privacy risks by not explicitly sharing private data. 
   However, recent works demonstrated that sharing model updates makes FL vulnerable to inference attack. In this work, we show our key observation that the data representation leakage from gradients is the essential cause of privacy leakage in FL. We also provide an analysis of this observation to explain how the data presentation is leaked. 
   Based on this observation, we propose a defense against model inversion attack in FL. The key idea of our defense is learning to 
   perturb data representation such that the quality of the reconstructed data is severely degraded, while FL performance is maintained. In addition, we derive certified robustness guarantee to FL and convergence guarantee to FedAvg, 
   after applying our defense. 
   To evaluate our defense, we conduct experiments on MNIST and CIFAR10 for defending against the DLG attack and GS attack. Without sacrificing accuracy, the results demonstrate that our proposed defense can increase the mean squared error between the reconstructed data and the raw data by as much as more than 160$\times$ for both DLG attack and GS attack, compared with baseline defense methods. The privacy of the FL system is significantly improved. 
\end{abstract}

\section{Introduction}\label{sec:Introduction}

Federated learning (FL)~\cite{mcmahan2017communication} is a popular distributed learning approach that enables a number of devices to train a shared model in a federated fashion without transferring their local data. A central server coordinates the FL process, where each participating device communicates only the model parameters on the central server while keeping local data private. 
Thus, FL becomes a natural choice for developing mobile deep learning applications, such as next-word prediction~\cite{hard2018federated}, emoji prediction~\cite{ramaswamy2019federated}, etc.


Privacy preserving is the major motivation for proposing FL. However, recent works demonstrated that sharing model updates or gradients also makes FL vulnerable to inference attack, e.g., \textit{property inference attack}~\cite{melis2019exploiting} and \textit{model inversion attack}~\cite{fredrikson2015model, zhu2019deep, geiping2020inverting, wang2019beyond}. 
Here property inference attack infers sensitive properties of training data using the model updates and model inversion attack reconstructs training data using model gradients. 
However, the essential causes of such privacy leakages have not been thoroughly investigated or explained.
Some defense strategies have been presented to prevent the privacy leakage and can be categorized into three types:
\textit{differential privacy}~\cite{pathak2010multiparty,shokri2015privacy,hamm2016learning,mcmahan2018learning,geyer2017differentially}, \textit{secure multi-party computation}~\cite{danner2015fully,mohassel2017secureml,bonawitz2017practical,melis2019exploiting}, and \textit{data compression}~\cite{zhu2019deep}. 
But these defensive approaches incur either significant computational overheads or unignorable accuracy loss. 
The reason is that existing
defenses are not specifically designed for the privacy leakage from the communicated local updates.
The privacy issues 
seriously 
hinder the development and deployment of FL. 
There is an urgent necessity to unveil the essential cause of privacy leakage 
such that we can develop 
effective defenses to tackle the privacy issue of FL.

In this work, we assume the server in FL is malicious, and it aims to reconstruct the private training data 
from 
devices.
Our key observation is: \textbf{the class-wise data representations of each device's data are embedded in shared local model updates, and such data representations can be inferred to perform model inversion attacks. Therefore, the information can be severely leaked through the model updates.} 
In particular, we provide an analysis to reveal how the data representations, e.g., in the fully connected (FC) layer, are embedded in the model updates. 
We then propose an algorithm to infer class-wise data representation to perform model inversion attacks.
Our empirical study demonstrates that the correlation between the inferred data representations using our algorithm and the real data representations is as high as 0.99 during local training, and thus prove that the representations leakage is the essential cause behind existing attacks. 
Note that the data is often non-IID (identically and independently distributed) across 
the devices in FL. We also show that the non-IID characteristic aggravates the representation leakage. 

Based on our observation of the representation leakage from local updates, we design a defense strategy. Specifically, we present an algorithm to generate a perturbation added to the data representation, such that: 
1) the perturbed data representations are as similar as possible to the true data representations to maintain the FL performance; 
and 2) the reconstructed data using the perturbed data representations are as dissimilar as possible to the raw data. 
Importantly, we also derive 
certified robustness guarantee to FL   
and 
convergence guarantee to FedAvg, a popular FL algorithm, when applying our defense. 
To evaluate the effectiveness of our defense, we conduct experiments on MNIST and CIFAR10 for defending against the DLG attack~\cite{zhu2019deep} and GS attack~\cite{geiping2020inverting}. The results demonstrate that without sacrificing accuracy, our proposed defense can increase squared error (MSE) mean between the reconstructed data and the raw data for both DLG attack and GS attack by as many as more than 160$\times$, compared with baseline defense methods. 
The privacy of the FL system is significantly improved.
 
Our key contributions are summarized as follows:
 \begin{itemize}
    \item To the best of our knowledge, this is the first work to explicitly reveal that data representations embedded in the model updates is the essential cause of leaking private information from the communicated local updates in FL. In addition, we  
    develop an algorithm to effectively reconstruct the data from the local updates. 
    \item 
    We develop an effective defense by perturbing 
    data representations. We also derive certified robustness guarantee to FL and convergence guarantee to FedAvg, when applying our defense.
    \item We empirically evaluate our defense on MNIST and CIFAR10 against DLG  and GS attacks. The results show our defense can offer a significantly stronger privacy guarantee without sacrificing accuracy.
\end{itemize}
\section{Related work}\label{sec:Related work}
\paragraph{Privacy Leakage in Distributed Learning.} There exist several adversarial goals to infer private information: \textit{data reconstruction}, \textit{class representative inference}, \textit{membership inference}, and \textit{attribute inference}.
Data reconstruction aims to recover training samples that are used by participating clients. The quality of the reconstructed samples can be assessed by comparing the similarity with the original data. Recently, Zhu \etal~\cite{zhu2019deep} present an algorithm named \textit{DLG} to reconstruct training samples by optimizing the input to generate the same gradients for a particular client. Following up DLG, \textit{iDLG}~\cite{zhao2020idlg} is proposed to improve the efficiency and accuracy of DLG. Aono \etal~\cite{aono2017privacy} also show that an honest-but-curious server can partially reconstruct clients' training inputs using their local updates. However, such an attack is applicable only when the batch consists of a single sample. Wang \etal~\cite{wang2019beyond} present a reconstruction attack by incorporating a generative adversarial network (GAN) with a multi-task discriminator. But this method is only applicable to scenarios where data is mostly homogeneous across clients and auxiliary dataset is available. 
Several approaches have been proposed to infer class features or class representatives. Hitaj \etal~\cite{hitaj2017deep} demonstrate that an adversarial participant in the collaborative learning can utilize GANs to construct class representatives. However, this technique is evaluated only when all samples of the same class are virtually similar (e.g., handwritten digits, faces, etc.).
Membership inference attack (MIA) is performed to accurately determine whether a given sample has been used for the training. This type of attack is first proposed by Shokri\etal~\cite{shokri2017membership}, and it can be applied to any types of machine learning models even under black-box settings. 
Sablayrolles \etal~\cite{sablayrolles2019white} propose an optimal strategy for MIA under the assumption that model parameters conform to certain distributions. Nasr \etal~\cite{nasr2019comprehensive} extend MIA to federated learning for quantifying the privacy leakage in the distributed setting.
Attribute inference attack tries to identify some sensitive attributes of training data. There exist several techniques to perform this type of attack~\cite{ateniese2015hacking,fredrikson2014privacy,fredrikson2015model,hitaj2017deep}. Fredrikson \etal~\cite{fredrikson2014privacy} propose a method to reveal genomic information of patients using model outputs and other non-sensitive attributes. More recently, Melis \etal~\cite{melis2019exploiting} demonstrate that an adversarial client can infer attributes that hold only for a subset of the training data based on the exchanged model updates in federated learning.

\begin{figure*}[t]
\centering
     \includegraphics[scale=0.45]{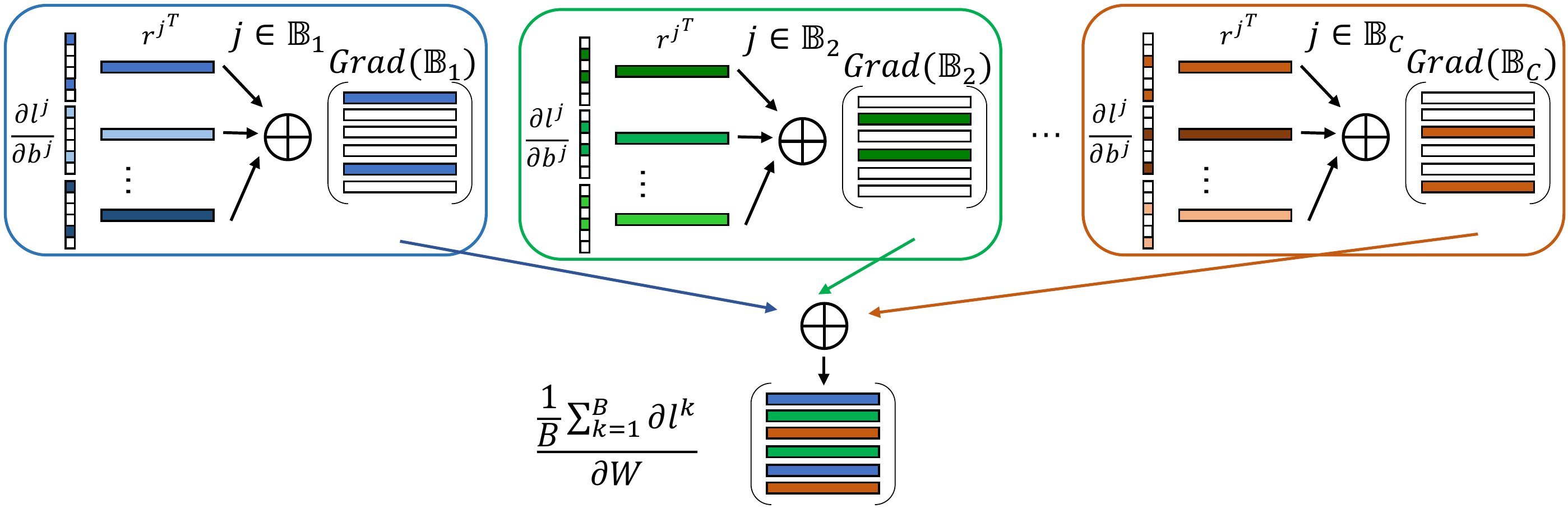}
\caption{Illustration of the gradient updates of class-wise data in a batch.}
\label{fig:GradB_sum}
\end{figure*}

\paragraph{Privacy-preserving Distributed Learning.}
Existing privacy-preserving distributed learning methods can be categorized into three types: \textit{differential privacy} (DP), \textit{secure multi-party computation} (MPC), and \textit{data compression}. 
Pathak \etal~\cite{pathak2010multiparty} present a distributed learning method to compose a deferentially private global model by aggregating locally trained models.
Shokri \etal~\cite{shokri2015privacy} propose a collaborative learning method where the sparse vector is adopted to achieve DP.
Hamm \etal~\cite{hamm2016learning} design a distributed learning approach to train a  deferentially private global model via transferring the knowledge of the local model ensemble. Recently, participant-level deferentially private federated learning are proposed~\cite{mcmahan2018learning,geyer2017differentially} via injecting Gaussian noise to local updates. 
However, these DP-based methods require a large number of participants in the training to converge and realize a desirable privacy-performance tradeoff. 
In addition, MPC has also been applied to develop privacy-preserving machine learning in a distributed fashion. For example, Danner \etal~\cite{danner2015fully} propose a secure sum protocol using a tree topology. Another example of the MPC-based approach is SecureML~\cite{mohassel2017secureml}, where participants distribute their private data among two non-colluding servers, and then the two servers use MPC to train a global model using the participants' encrypted joint data. Bonawitz \etal~\cite{bonawitz2017practical} propose a secure multi-party aggregation method for FL, 
where participants are required to encrypt their local updates such that the central server can only recover the aggregation of the updates. However, these MPC-based approaches will incur unneglectable computational overhead. It is even worse that attackers can still successfully infer private information even if the adversary only observes the aggregated updates~\cite{melis2019exploiting}. 
Furthermore, Zhu \etal~\cite{zhu2019deep} show applying gradient compression and sparsification can help defend against privacy leakage from shared local updates. However, such approaches require a high compression rate to achieve a desirable defensive performance. In Section \ref{sec:experiments}, given the same compression rate, we show that our proposed method can achieve better defense and inference performance than that of the gradient compression approach.

\section{Essential Cause of Privacy Leakage in FL}
\label{sec:theory}

Existing works~\cite{zhu2019deep,zhao2020idlg,aono2017privacy,wang2019beyond} demonstrate that information leakage is from communicated model updates between the devices and server during FL training. However, they do not provide a thorough explanation. 
To understand the essential cause of information leakage in FL, 
we analyze the privacy leakage in FL. 
Our key observation is that privacy leakage is essentially caused by the data representations embedded in the model updates. 

\subsection{Representation Leakage in FL}\label{sec:leakage}
\noindent {\bf Problem setup.} 
In FL, there are multiple client devices and a center server. The server coordinates the FL process,  where  each  participating  device  communicates  only the  model  parameters  with  the server  while  keeping their local  data  private.
We assume the server is malicious and it only has access to the devices' model parameters. 
The server's purpose is to infer the devices' data through the devices' model parameters. 

\noindent{\bf Key observations on representation leakage in FL: Data representations are less entangled.}
For simplicity, we use the fully connected (FC) layer as an instance and analyze how data representation is leaked in FL. 
We note that  such an analysis can be naturally extended to other types of layers. Specifically, we denote a FC layer as $\vb = \mW \vr$, where $\vr$ is the input to the FC layer (i.e., the learnt data representation by previous layers), $\mW$ is the weight matrix, and $\vb$ is the output.
Then, given a training batch $\sB$, the gradient of the loss $l$ with respect to $\mW$ is:
{\small
\begin{equation}
\frac{\frac{1}{|\sB|} \sum_{i=1}^{|\sB|} \partial l^{i}} {\partial \mW} = 
\frac{1}{|\sB|} \sum_{i=1}^{|\sB|} \frac{\partial l^{i}}{\partial \vb^{i}} \frac{\partial \vb}{\partial \mW} =
\frac{1}{|\sB|} \sum_{i=1}^{|\sB|} \frac{\partial l^{i}}{\partial \vb^{i}}({\vr^{i}})^{\mathrm{T}},
\label{eq:fc_grad}
\end{equation} 
}%
where $l^i$, $\vr^i$, and $\vb^i$ are the loss corresponding to the $i^{th}$ sample, the input, and the output of the FC layer in this batch, respectively.
We observe that the gradient for a particular sample is the product of a column vector $\frac{\partial l^{i}}{\partial \vb^{i}}$ and a row vector $({\vr^{i}})^{\mathrm{T}}$.
Suppose the training data has $C$ labels. 
We can split the batch $\sB$ into $C$ sets, i.e., $\sB = \{\sB_0, \sB_1, \dots, \sB_C\}$, where $\sB_k$ denotes the data samples with the $k$-th label.
Then, Eq. \ref{eq:fc_grad} can be rewritten as:
{\small
\begin{equation}
\frac{\frac{1}{|\sB|} \sum_{k=1}^{|\sB|} \partial l^{k}} {\partial \mW} 
= \sum_{i=1}^{C} \left( \frac{1}{\left|\sB_i\right|} \sum_{j \in \sB_i} \frac{\partial l^{j}}{\partial \vb^{j}} ({\vr^{j}})^{\mathrm{T}} \right) 
\stackrel{\Delta}{=} \sum_{i=1}^{C} Grad(\sB_i),
\label{eq:fc_grad_B}
\end{equation}
}%
where 
$Grad(\sB_i)$ represents the gradient with respect to the data samples in $\sB_i$. 
Figure~\ref{fig:GradB_sum} illustrates the gradient updates for a batch data in a FC layer.  
We observe that for data coming from different classes, the corresponding data representations tend to be embedded in different rows of gradients. 
If the number of classes is large in a batch, which is common in centralized training, the representations of different classes will be entangled in the gradients of this whole batch.
In contrast to 
centralized training, the local data often covers a small number of tasks on a participating device in FL. 
Thus, the number of data classes $C$ within one training batch may be very small compared to that of the centralized training. In this case, the entanglement of data representations from different classes can be significantly reduced. 
Such a low entanglement of data representations allows us to 
explicitly reconstruct
the input data of each class from the gradients, because we can (almost) precisely locate the rows of data representations in the gradients.

Note that in the above analysis, we only consider a single batch during the FL training. In practice, FL is often trained with multiple batches. In this case, the data representations of different classes could be entangled, especially when the number of batches is large. 
However, in practical FL applications, the devices often have insufficient data. During FL training, the numbers of batches and local training epochs of each device are both small. 
In this case, the data representations could still be less entangled across classes through inspecting the gradient updates in Eq. \ref{eq:fc_grad_B}.

\subsection{Inferring Class-wise Data Representations }\label{sec:inferring representations}

\begin{figure}[t]
\centering
     \includegraphics[scale=0.4]{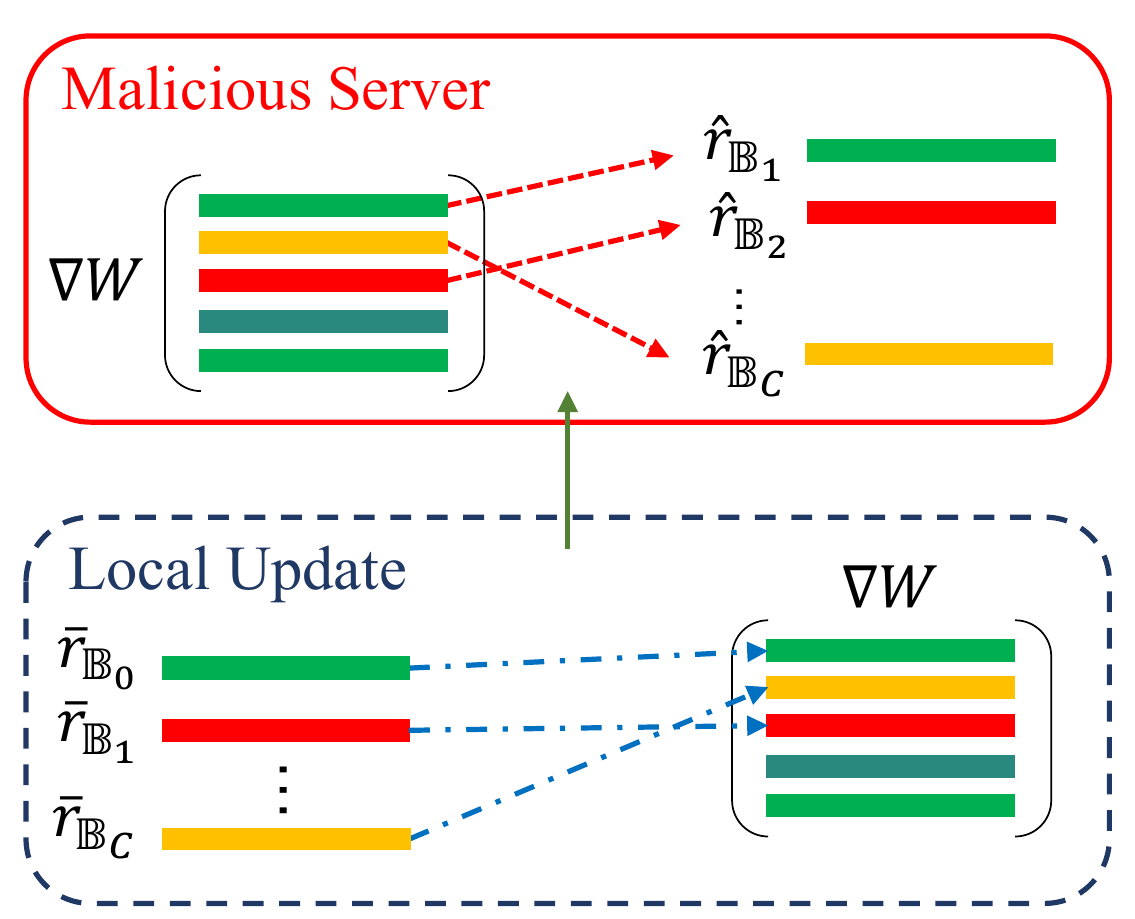}
\caption{Illustration of our representation inference algorithm.}

\label{fig:feature_extractor_algorithm}
\end{figure}

    

We develop an algorithm to identify the training classes and infer the data representations of each  class 
embedded in each FC layer from the model updates. The representation inference algorithm is in a back propagation fashion.
Specifically, we first identify the classes using the gradients of the last ($L-th$) layer. We denote the gradients of the $L$-th layer as $\nabla \mW^{L}$. We notice that the gradient vector ${\nabla \mW_i^{L}}$, which is the $i$th row in $\nabla \mW^{L}$, shows significantly larger magnitudes than gradient vectors in other rows if the data from $i$-th class are involved in training. Then, we can infer the data representation of the $i$-th class in the last layer,  because it linearly scales ${\nabla \mW_i^{L}}$.
If data representation of the $i$-th class in layer ${\mW^{L}}$ is inferred, we can use their element values to 
identify the corresponding row from the $(L-1)$-th layer's gradients, i.e., ${\mW^{L-1}}$, which embeds the data representation of the $i$-th class in the $(L-1)$-th layer. 
In this way, we can iteratively infer data representations of the $i$-th class in all FC layers. The inference process for one FC layer is illustrated in \figref{fig:feature_extractor_algorithm} and the details of our representation inference algorithm are described in Appendix~\ref{sec:method}. 

We conduct experiments on CIFAR10 \cite{krizhevsky2009learning} to evaluate the effectiveness of our algorithm.
We consider the practical non-IID settings in FL, and follow the \textbf{2-class \& balanced} configuration in \cite{li2020lotteryfl} to construct non-IID datasets: 100 devices in total and 10 devices are randomly sampled to participate in training in each communication round. Each device holds 2 classes of data and each class has 20 samples. 

As 
local training configurations can affect the performance of inferred representation. 
In this experiment, we vary the number of local training epochs $E \in \{ 1, 5, 10\}$ and local batch size $B \in \{ 8, 16, 32\}$. We adopt SGD as the optimizer with a learning rate $\eta=0.01$. 
The model architecture is shown in Appendix~\ref{sec:setup}.
We also consider a baseline in the IID setting, where we set $E$ to be 1 and $B$ to be 32.

\begin{table}[t]
    \centering
    \footnotesize
    \caption{Average $cor$ across 200 communication rounds for different layers under different settings.}
        \label{tb:cor_average}
        \begin{tabular}{l || c c c}
            \toprule
            Local Training Configurations & FC1 & FC2 & FC3 \\
            \hline
            E=1, B=32 & 0.98 & 0.99 & 0.99 \\
            \hline
            E=5, B=32 & 0.82 & 0.90 & 0.92 \\
            \hline
            E=10, B=32 & 0.70 & 0.78 & 0.82 \\
            \hline
            E=1, B=16 & 0.82 & 0.93 & 0.99 \\
            \hline
            E=1, B=8 & 0.85 & 0.89 & 0.92 \\
            \hline
            E=1, B=32 (IID) & 0.48 & 0.31 & 0.18 \\
            \bottomrule
        \end{tabular}
\end{table}

We use the correlation coefficient $cor$ between the true representation $\overline{\vr}_{\sB_i}^{\mathrm{T}}$ and our inferred $\hat{\vr}_{\sB_i}^{\mathrm{T}}$ to quantify the effectiveness of our proposed algorithm. We calculate $cor$ for each class on each 
participating device. 
We extract data representations from all the FC layers in each of 200 communication rounds between the devices and the server, and the average $cor$ across all communication 
rounds and devices is shown in Table \ref{tb:cor_average}. 
As Table \ref{tb:cor_average} presents,  
the correlation $cor$ 
is as high as 0.99, indicating a serious representation leakage in FL. 
$cor$ decreases with $B$ or a larger number of batches in one epoch and increases as $E$ goes lower, which validate our claim in Section~\ref{sec:leakage}.
However, $cor$ is still higher than 0.8 in almost all cases. 
We note that $cor$ is much lower in the IID-setting. 
This is because each device has more classes of data for training than those in non-IID setting, making the representations entangled. 
Our results validate that the practical non-IID setting in FL dramatically worsens the representation leakage.

\begin{figure}[t]
\centering
     \includegraphics[scale=0.25]{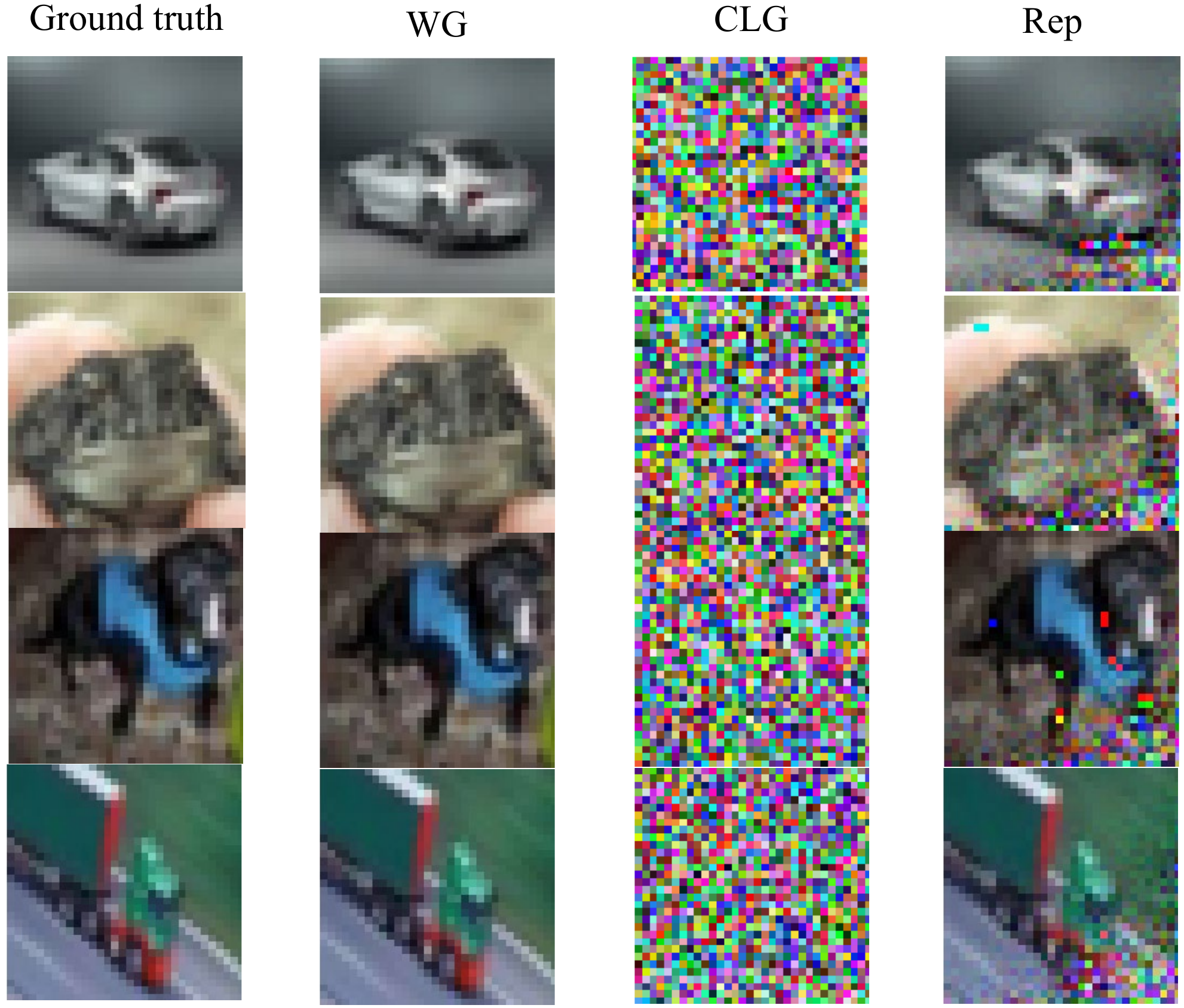}
\caption{DLG attack results utilizing different parts of gradients.}
\label{fig:prove_representation_essential}
\end{figure}

\subsection{Unveiling Representation Leakage}\label{sec:unveiling representation leakage}
In this section, we investigate whether the representation leakage is the essential cause of information leakage in FL. 
Particularly, we conduct experiments on CIFAR10 to reconstruct the input based on the existing DLG attack~\cite{zhu2019deep}. DLG attack requires the gradient information, and we consider three different portions of the gradients: \textit{the whole model gradients (WG)}, \textit{the gradients of convolutional layers only (CLG)}, and \textit{inferred representations using our method (Rep)}. The experiment settings are presented in Appendix~\ref{sec:setup}.
As \figref{fig:prove_representation_essential} shows, only utilizing gradients of convolutional layers cannot successfully reconstruct the input data, but using the representation inferred by our method can reconstruct the input data as effectively as utilizing the whole gradients in terms of visual quality. This result validates that representation leakage is the essential cause of privacy leakage in FL.

\section{Defense Design}\label{sec:defense}

\subsection{Defense Formulation}
Our aforementioned observation shows that the privacy leakage in FL
mainly comes from the representation leakage (e.g., in the FC layer).  
In this section, we propose a defense against such privacy leakage. 
In particular, we propose to perturb the data representation in a single layer (e.g., a FC layer), which we call \emph{the defended layer}, to satisfy the following two goals: 
\begin{itemize}
\item Goal 1: To reduce the privacy information leakage, the reconstructed input through the perturbed data representations and the raw input should be dissimilar.   
\item Goal 2: To maintain the FL performance, the perturbed 
data representation and the true data representations without perturbation should be similar. 

\end{itemize}


Let $r$ and $r'$ represent the clean data representation and perturbed data representation on the defended layer, respectively. 
We also define $X$ and $X'$ as the raw input and the reconstructed input via the perturbed data representation. 
To satisfy Goal 1, we require that the distance between $X$ and $X'$, in terms of $L_p$ norm, should be as large as possible;
To satisfy Goal 2, we require that the distance between $r$ and $r'$, in terms of $L_q$ norm, should be bounded. 
Formally, we have the following constrained objective function with respect to $r'$:
\begin{align}
& \textrm{\bf Achieving Goal 1: } \max\limits_{r'} || X-X' ||_p,  \label{max_diff} \\
& \textrm{\bf Achieving Goal 2: } \, \, \textrm{s.t., } \, \, || r-r' ||_q \leq \epsilon, \label{min_rep}
\end{align}
where $\epsilon$ is a predeminted threshold. Note that $X'$ depends on $r'$. 
Next, we design a solution to obtain $r'$ and derive the certified robustness. 




\begin{figure}[t]
\centering
     \includegraphics[scale=0.35]{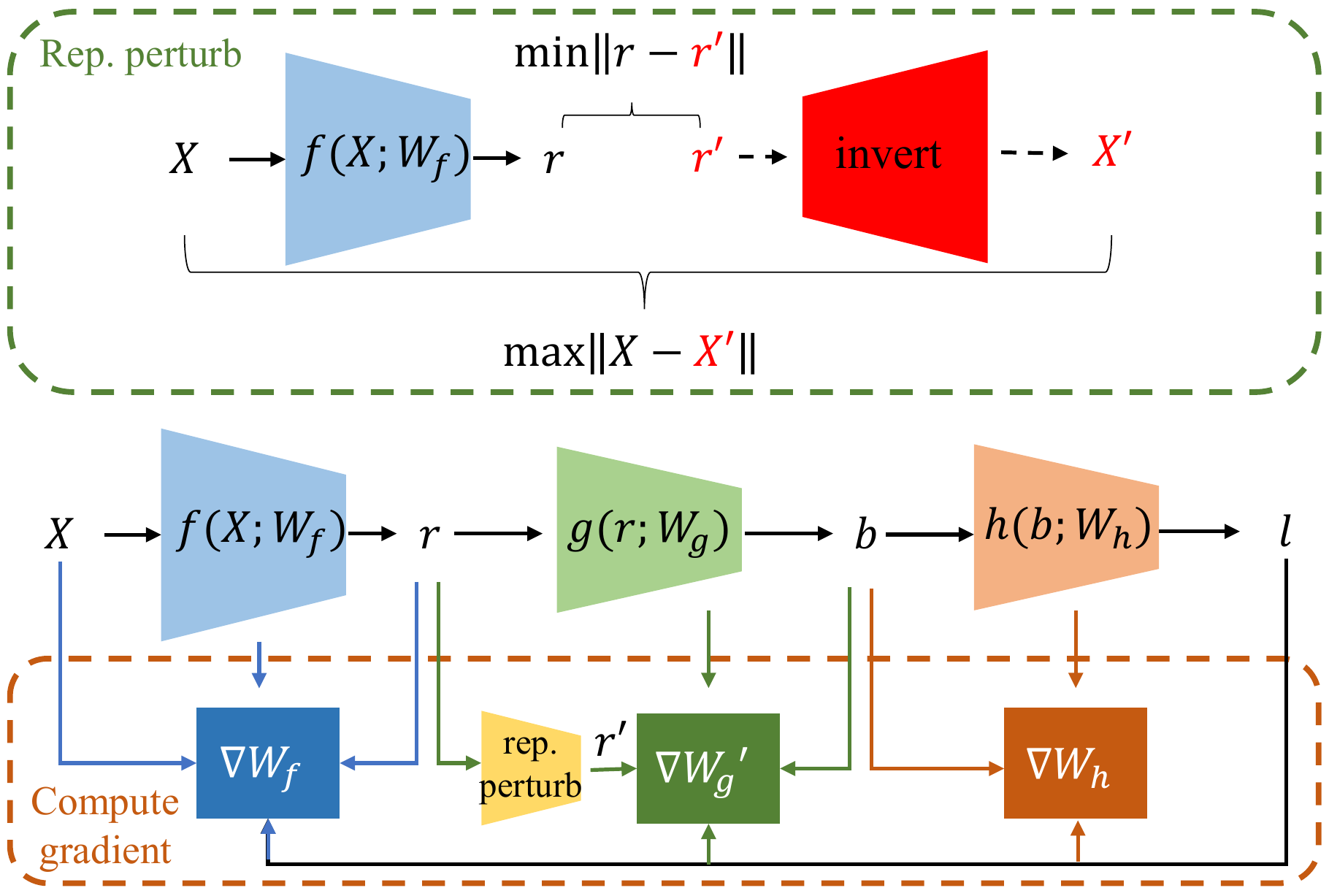}
\caption{Illustration of our representation perturbation defense.}
\label{fig:defense_scheme}
\end{figure}

\subsection{Defense Solution}

Let $f:X\rightarrow{r}$ be the feature extractor
before the defended layer. 
Prior to obtaining our solution, we make the following assumption and use the inverse function theorem. 
\newtheorem{assumption}{Assumption}
\begin{assumption}\label{asm:f_inverse}
The inverse of $f$, i.e., $f^{-1}$, exists on $r$ and $r'$, $\forall || r-r' ||_q \leq \epsilon$. 
\end{assumption}


\newtheorem{lemma}{Lemma}
\begin{lemma}\label{lemma:f_inverse}
For $\forall f:X\rightarrow{r}$ and $f^{-1}:r\rightarrow{X}$, $\nabla_r f^{-1} = {(\nabla_X f)}^{-1}$.
\end{lemma}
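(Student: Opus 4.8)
The plan is to invoke the inverse function theorem, whose essential content is precisely this Jacobian identity, and to derive it directly from the chain rule applied to the composition $f^{-1}\circ f$. By Assumption~\ref{asm:f_inverse}, $f^{-1}$ exists on the relevant neighborhood of $r$, so the relation $f^{-1}(f(X)) = X$ holds as an identity in $X$. This identity is the only structural fact I need.

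First I would differentiate both sides of $f^{-1}(f(X)) = X$ with respect to $X$. The right-hand side is the identity map, so its Jacobian is the identity matrix $I$. The left-hand side is a composition, so the chain rule factors its Jacobian as the product of the Jacobian of $f^{-1}$ evaluated at $r = f(X)$ with the Jacobian of $f$ evaluated at $X$. In the paper's notation this reads $(\nabla_r f^{-1})(\nabla_X f) = I$.

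Then I would read off the claim. Since the product of these two Jacobian matrices equals $I$, the matrix $\nabla_r f^{-1}$ is a left inverse of $\nabla_X f$; and because $f$ and $f^{-1}$ map between spaces of equal dimension, both Jacobians are square of the same size, so a left inverse is automatically a two-sided inverse. Hence $\nabla_r f^{-1} = (\nabla_X f)^{-1}$, which is exactly the stated lemma.

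The main point requiring care is regularity rather than algebra: the argument tacitly assumes $f$ is differentiable with a nonsingular Jacobian at the points in question, and that $X$ and $r$ have the same dimension so the Jacobians are square. Assumption~\ref{asm:f_inverse} supplies invertibility of $f$ as a map; I would remark that under the standard inverse function theorem hypotheses (continuous differentiability together with nonsingular $\nabla_X f$), the invertibility of the Jacobian is automatic, which simultaneously justifies the chain-rule step and the final matrix inversion. I expect this regularity bookkeeping to be the only delicate part, the computation itself being a one-line application of the chain rule.
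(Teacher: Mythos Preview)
Your proposal is correct and aligns with the paper's treatment: the paper does not actually prove this lemma but simply introduces it as ``the inverse function theorem'' immediately before stating it, so your chain-rule derivation of the Jacobian identity is exactly the standard justification the paper is implicitly invoking. Your added remarks on the regularity hypotheses (differentiability, nonsingular Jacobian, matching dimensions) are appropriate caveats that the paper leaves unstated.
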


Then, our object function can be reduced as follows: 
{\small
\begin{align}
r'&=\arg\max\limits_{r'} || X-X' ||_p , \, \,  s.t. || r-r' ||_q \leq \epsilon \label{eq:x_error_solution}\\ 
&=\arg\max\limits_{r'} || f^{-1}(r)-f^{-1}(r') ||_p, \, \, s.t. || r-r' ||_q \leq \epsilon \label{eq:apply_inverse} \\ 
&\approx \arg\max\limits_{r'} || \nabla_{r} f^{-1} \cdot (r - r') ||_p, \, \,  s.t. || r-r' ||_q \leq \epsilon \label{eq:apply_asm1}\\ 
&= \arg\max\limits_{r'} || {(\nabla_{X} f)}^{-1} \cdot (r - r') ||_p, \, \,  s.t. || r-r' ||_q \leq \epsilon \label{eq:min_object},
\end{align}
}%
where we use Assumption~\ref{asm:f_inverse} in Eq.~\ref{eq:apply_inverse}, use the first-order Taylor expansion in Eq.~\ref{eq:apply_asm1}, and use Lemma \ref{lemma:f_inverse} in Eq.~\ref{eq:min_object}.

Note that, with different choices of  $||.||_p$ and $||.||_q$, we have different defense solutions and thus have different defense effects. 
In this work, we set $p=2$, i.e., we aim to maximize the MSE between the reconstructed input and the raw input. 
Meanwhile, we set $q=0$ due to two reasons: our defense has an analytical solution 
and is communication efficient. 
Specifically, our solution is to find the $\epsilon$ largest elements in the set $\{||r_i (\nabla_{X} f(r_i))^{-1} ||_2\}$. 
Moreover, the learnt perturbed representation is relatively sparse and thus improves the communication efficiency. Algorithm~\ref{alg:train_r'} details the solution to obtain the perturbed presentation $r'$ with $q=0$ and $p=2$. Algorithm~\ref{alg:train_W'} details the local training process with our defense on a local device.

\begin{algorithm}[t] 
\footnotesize
\renewcommand{\algorithmicrequire}{\textbf{Input:}}
\renewcommand{\algorithmicensure}{\textbf{Output:}}
    \caption{Learning perturbed representation $r'$ with $q=0$ and $p=2$.}  
    \label{alg:Framwork}  
    \begin{algorithmic}[1] 
        \Require Training data $X \in \sR^{M\times N}$; Feature extractor $f:\sR^{M\times N}\rightarrow \sR^{L}$ before the defended layer; Clean data representation $r\in \sR^L$; Perturbation bound: $\epsilon $;
        \Ensure Perturbed data representation $r' \in \sR^L$;
        \Function {Perturb\_rep}{$X, f, r, \epsilon$}
            \State Compute $||r_i {(\nabla_{X} f(r_i))}^{-1}||_2$ for $i=0, 1, ..., L-1$;
            \State Find the set $\mathbb{S}$ which contains the indices of $\epsilon$ largest elements in $\{||r_i {(\nabla_{X} f(r_i))}^{-1}||_2\}_{i=1}^L$;
            \State $r' \leftarrow r$;
            \State Set $r'_i=0$ for $i\in \mathbb{S}$;
            \State \Return $r'$;
        \EndFunction
    \end{algorithmic} \label{alg:train_r'}
\end{algorithm} 
\setlength{\textfloatsep}{2mm}


\begin{algorithm}[t]  
\footnotesize
\renewcommand{\algorithmicrequire}{\textbf{Input:}}
\renewcommand{\algorithmicensure}{\textbf{Output:}}
    \caption{Local training process with our defense on a local device. }
    \label{alg:Framwork}  
    \begin{algorithmic}[1] 
        \Require Training data $\mX \in \sR^{M\times N}$; Local objective function $F: \sR^{M\times N} \rightarrow \sR$; Feature extractor $f: \mW_f \in \sR^{M\times N}\rightarrow \sR^{L}$ before the defended layer; The defended layer $g: \mW_g \in \sR^{L}\rightarrow \sR^{K}$; Feature extractor after the defended layer $h: \mW_h \in  \sR^{K}\rightarrow \sR$; Local model parameters $\mW = \{\mW_f, \mW_g, \mW_h\}$; Learning rate $\eta$. 
        \Ensure Learnt model parameter $\mW$ with our defense.
        \State Initialize $\mW$;
        \For {$\sB$ in local training batches}
            \For {$\mX \in \sB$}
                \State $l\leftarrow F(\mX; \mW)$;
                \State $\vr\leftarrow f(\mX; \mW_f)$;
                \State $\vb\leftarrow g(\vr; \mW_g)$; // e.g., $\vb = \mW_{g} \vr$ for FC layers
                \State $l\leftarrow h(\vb; \mW_h)$;
                \State $\{\nabla \mW_f, \nabla \mW_g, \nabla \mW_h\}\leftarrow\nabla_{\mW}F(\mX;\mW)$; 
                \State $\vr'\leftarrow Perturb\_rep(\mX, f(;\mW_f), \vr, \epsilon)$;
                \State ${\color{red}\nabla \mW_g'} \leftarrow \tau(l, b, r', \mW_g)$; // e.g., $\nabla \mW_{g}' = \frac{\partial l}{\partial \vb} {\vr'}^{T}$ in FC 
                \State $\nabla \mW = \{\nabla \mW_f, {\color{red}\nabla \mW_g'}, \nabla \mW_h\}$;
                \State $\mW \leftarrow \mW - \eta \nabla{\mW}$;
            \EndFor
        \EndFor
        
    \end{algorithmic} \label{alg:train_W'}
\end{algorithm}

\subsection{Certified Robustness Guarantee}
We define our {\textit{certified robustness guarantee}} as the certified minimal distance (in terms of $L_p$-norm) between the raw input and the reconstructed input. 
A larger defense bound indicates that our defense is more effective. 
Specifically, we have the following theorem on our defense bound: 
\newtheorem{theorem}{Theorem}
\begin{theorem}
\label{thm:bound}
Assuming Assumption~\ref{asm:f_inverse} holds. Given a data input $X$, its representation $r$ and any perturbed data representation $r'$, we have:
\begin{align}
||X - X'||_{p} \geq \frac{||r-r'||_{p}}{||\nabla_X f||_{p}}.
\end{align}
\end{theorem}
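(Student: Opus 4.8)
The plan is to turn the target inequality around: instead of bounding the reconstruction error $||X-X'||_p$ from above, I want a \emph{lower} bound on it, so I will express the representation gap $r-r'$ in terms of the input gap $X-X'$ through the Jacobian of $f$ and then invoke the submultiplicativity of the induced operator norm. By Assumption~\ref{asm:f_inverse}, $f^{-1}$ exists on the whole segment joining $r$ and $r'$, so $X=f^{-1}(r)$ and $X'=f^{-1}(r')$ are well defined and the two gaps are genuinely linked by $f$; this is what lets me pass between the input space and the representation space.

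First I would linearize. A first-order Taylor expansion of $f$ about $X$ gives
\begin{align}
r-r' = f(X)-f(X') \approx \nabla_X f \cdot (X-X'),
\end{align}
which is exactly the relation implicit in the reduction of Eqs.~\ref{eq:apply_asm1}--\ref{eq:min_object} (there applied to $f^{-1}$ via Lemma~\ref{lemma:f_inverse}). Next I would apply the defining submultiplicative property of the $L_p$-induced operator norm, $||A v||_p \le ||A||_p\,||v||_p$, with $A=\nabla_X f$ and $v=X-X'$, to obtain
\begin{align}
||r-r'||_p \le ||\nabla_X f||_p\,||X-X'||_p.
\end{align}
Dividing both sides by $||\nabla_X f||_p$ (nonzero, since $\nabla_X f$ is invertible by Assumption~\ref{asm:f_inverse} together with Lemma~\ref{lemma:f_inverse}) immediately yields the claimed bound $||X-X'||_p \ge ||r-r'||_p / ||\nabla_X f||_p$. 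Equivalently, one can start from the $f^{-1}$ form $X-X' \approx (\nabla_X f)^{-1}(r-r')$ in Eq.~\ref{eq:min_object}, rewrite $r-r' = \nabla_X f\,(X-X')$, and apply the same submultiplicative step; both routes rely on nothing beyond that one norm inequality.

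The step I expect to be the main obstacle is making the single approximate equality honest. The Taylor expansion is only first order, so strictly it carries an $O(||X-X'||_p^2)$ remainder; to state the conclusion as an exact inequality one should interpret $||\nabla_X f||_p$ as a supremum of the Jacobian norm over the segment between $X$ and $X'$ and invoke the mean-value inequality for vector-valued maps, $||f(X)-f(X')||_p \le \big(\sup_t ||\nabla f(X'+t(X-X'))||_p\big)\,||X-X'||_p$, rather than the Jacobian evaluated only at $X$. I would therefore flag precisely where the linearization enters, and note that under the paper's working assumption that $f$ is well-approximated by its linearization on the $\epsilon$-ball $||r-r'||_q\le\epsilon$, the approximate and exact statements coincide.
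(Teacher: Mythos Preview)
Your argument is correct and matches the paper's proof in spirit: both rely on a first-order linearization together with submultiplicativity of the $p$-norm. The only cosmetic difference is that the paper linearizes $f^{-1}$ (writing $X-X'=(\nabla_X f)^{-1}(r-r')$ via Lemma~\ref{lemma:f_inverse}) and then applies $||C^{-1}D||_p\ge ||D||_p/||C||_p$, whereas you linearize $f$ directly and apply $||Av||_p\le ||A||_p||v||_p$; these two routes are algebraically equivalent, and you even note the $f^{-1}$ variant yourself. Your explicit flagging of the Taylor remainder is a point the paper's proof glosses over with an equality sign.
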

\begin{proof}
See our proof in Appendix~\ref{app:bound}.
\end{proof}

\section{Convergence Guarantee}
\label{sec:cvg_theory}

In this section, we derive the convergence guarantee of FedAvg~\cite{mcmahan2017communication}---the most popular FL algorithm, 
 with our proposed defense. 
We first describe the FedAvg algorithm with our defense and then present our theorem on the convergence guarantee. 

\subsection{FedAvg with Our Defense}
In classical FedAvg, the objective function is defined as: 

{\small
\begin{equation}
\label{eqn:fedavg}
\mW = \min\limits_{\mW} \{ F(\mW) \triangleq \sum\limits_{k=1}^{N} p_{k} F_{k}(\mW) \},
\end{equation}
}%
where $p_k$ is the weight of the $k$-th device, $p_k \geq 0$ and $\sum_{k=1}^N p_{k} = 1$. 
$F_k$ is the local objective in the $k$-th device. 

Equation~\ref{eqn:fedavg} 
is solved via an iterative server-devices communication as follows: Suppose the server has learnt the global model ${\mW}_{t}$ in the $t$-th round, and randomly selects $K$ devices $\sS_t$ with replacement according to the sampling probabilities $p_1, ..., p_N$ for the next training round. Then FedAvg is performed as follows: 
First, the server sends the global model ${\mW}_{t}$ to all devices. 
Then, all devices set their local model to be ${\mW}_{t}$, i.e., ${\mW}_t^k = {\mW}_t, \forall k \in [1:N]$, and each device 
performs $I$ iterations of local updates. 
Specifically, for the $i$-th iteration, the local model in the $k$-th device applying our defense is updated as:
{\small
\begin{align}
& \nabla F'_{k}(\mW_{t+i}^{k},{\xi}_{t+i}^{k}) = \mathcal{T}(\nabla F_{k}(\mW_{t+i}^{k},{\xi}_{t+i}^{k}))\\
& \mW_{t+i+1}^{k} \leftarrow \mW_{t+i}^{k} - {\eta}_{t+i}\nabla F'_{k}(\mW_{t+i}^{k},{\xi}_{t+i}^{k}), 
\end{align}
}%
where ${\eta}_{t+i}$ is the learning rate and ${\xi}_{t+i}^{k}$ is a data sample uniformly chosen from 
the $k$-th device. $\mathcal{T}(\cdot)$ is our defense scheme.
Finally, the server averages the local models of the selected $K$ devices and updates the global model as follows:

{\small
\begin{equation}
\mW_{t+I} \leftarrow \frac{N}{K}\sum\limits_{k \in \sS_t} p_{k} \mW_{t+I}^{k}.
\end{equation}
}%


\subsection{Convergence Analysis} 
Our convergence analysis is inspired by~\cite{li2019convergence}.  
Without loss of generality, we derive the convergence guarantee by applying our defense to a single layer. 
However, our results can be naturally generalized to multiple layers.  
 We denote the input representation, parameters, and output of a single 
(e.g. $s$-th) layer in the $k$-th device and in the $t$-th round as $r^k_t$, $\vw_{s}{_{t}^{k}}$ and $b^k_t$, respectively. 

Before presenting our theoretical results, we first make the following Assumptions~\ref{asm:L-smooth}-\ref{asm:gradient_norm} same as~\cite{li2019convergence} and an extra Assumption~\ref{asm:lamda} on bounding the squared norm of stochastic gradients with respect to the single $s$-th layer. 

\newtheorem{assumption_L-smooth}{Assumption}
\begin{assumption}\label{asm:L-smooth}
$F_1, F_2, ..., F_N$ are L-smooth: $\forall \mV, \mW$, $F_k(\mV) \leq F_k(\mW) + (\mV - \mW)^T \nabla F_k(\mW) + \frac{L}{2} ||\mV - \mW||_2^2$.
\end{assumption}

\newtheorem{assumption_u-convex}{Assumption}
\begin{assumption}\label{asm:u-convex}
$F_1, F_2, ..., F_N$ are $\mu$-strongly convex: $\forall \mV, \mW$, $F_k(\mV) \geq F_k(\mW) + (\mV - \mW)^T \nabla F_k(\mW) + \frac{\mu}{2} ||\mV - \mW||_2^2$.
\end{assumption}

\newtheorem{assumption_stochastic_gradients}{Assumption}
\begin{assumption}\label{asm:stochastic_gradients}
Let ${\xi}_{t}^{k}$ be sampled from the $k$-th device’s local data uniformly at random. The variance of stochastic gradients in each device is bounded: $\mathbb{E} ||\nabla F_{k}(\mW_{t}^{k},{\xi}_{t}^{k}) - \nabla F_{k}(\mW_{t}^{k}) ||^{2} \leq \sigma_k^2$ for $k = 1, ..., N$.
\end{assumption}

\newtheorem{assumption_gradient_norm}{Assumption}
\begin{assumption}\label{asm:gradient_norm}
The expected squared norm of stochastic gradients is uniformly bounded, i.e., $\mathbb{E} ||\nabla F_{k}(\mW_{t}^{k},{\xi}_{t}^{k}) ||^{2} \leq G^2$ for all $k = 1, ..., N$ and $t = 0, ..., T-1$.
\end{assumption}

\begin{assumption}\label{asm:lamda}
For the single $s$-th layer,
the squared norm of stochastic gradients on the output of each device is bounded: $||\nabla_{b_{t}^{k}} F_{k}({\vw_{s}}_{t}^{k},{\xi}_{t}^{k})||_2 \leq \Lambda_{s}$ for all $k = 1, ..., N$ and $t = 0, ..., T-1$.
\end{assumption}

We define $F^*$ and $F_k^*$ as the minimum value of $F$ and $F_k$ and let $\Gamma = F^* -  \sum\limits_{k=1}^{N} p_{k} F_{k}^{*}$. 
We assume each device has $I$ local updates and the total number of iterations is $T$. 
Then, we have the following convergence guarantee on FedAvg with our defense.  

\begin{theorem}
\label{thm:convergence}
Let Assumptions \ref{asm:L-smooth}-\ref{asm:lamda} hold and $L, \mu, \sigma_k, G, \Lambda_{s}$ be defined therein. Choose $\kappa = \frac{L}{\mu}$, $\gamma = \max \{8\kappa, I \}$ and the learning rate $\eta_t = \frac{2}{\mu(\gamma + t)}$. 
Then FedAvg with our defense satisfies 
{\small
\begin{equation*}
\mathbb{E}[F(\mW_T)] - F^* \leq \frac{2\kappa}{\gamma + T}(\frac{Q+ C}{\mu} + \frac{\mu\gamma}{2} \mathbb{E}||\mW_0 - {\mW}^*||^2),
\end{equation*}
}%
where
{\small
\begin{align*}
& Q = \sum\limits_{k=1}^{N} p_k^2 (\Lambda_{s} \cdot \epsilon + \sigma_k^2) + 6L\Gamma + 8(I-1)^2(\Lambda_{s} \cdot \epsilon+G^2) \\
& C = \frac{4}{K}{I}^2(\Lambda_{s} \cdot \epsilon+G^2). 
\end{align*}
}%
\end{theorem}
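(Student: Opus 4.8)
The plan is to reuse the analysis template of \cite{li2019convergence} for vanilla FedAvg, treating our defense $\mathcal{T}(\cdot)$ as the injection of a bounded, deterministic perturbation into each local stochastic gradient, and then to re-run their recursion while tracking where this perturbation contributes extra terms. First I would characterize the perturbation precisely. Writing $\nabla F'_{k}(\mW_{t}^{k},\xi_{t}^{k}) = \nabla F_{k}(\mW_{t}^{k},\xi_{t}^{k}) + e_{t}^{k}$, I observe that the defense only alters the gradient of the defended ($s$-th) layer, replacing $\frac{\partial l}{\partial b}\,r^{\mathrm{T}}$ by $\frac{\partial l}{\partial b}\,(r')^{\mathrm{T}}$, so $e_{t}^{k} = \frac{\partial l}{\partial b}\,(r'-r)^{\mathrm{T}}$ is supported on that layer and is rank one. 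Since $q=0$ zeroes at most $\epsilon$ coordinates of $r$, combining $\|r-r'\|_0 \le \epsilon$ with the output-gradient bound $\|\nabla_{b}F_{k}\|_2 \le \Lambda_{s}$ of Assumption~\ref{asm:lamda} yields the key estimate $\|e_{t}^{k}\|^2 \le \Lambda_{s}\cdot\epsilon$. This is the one place our defense enters the argument, and everything downstream is bookkeeping.

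Next I would set up the virtual averaged iterate $\bar{\mW}_t = \sum_{k} p_k \mW_t^k$ together with its perturbed update direction $\sum_k p_k \nabla F'_{k}$, exactly as in \cite{li2019convergence}. Expanding $\E\|\bar{\mW}_{t+1}-\mW^*\|^2$ produces the same three ingredients as the undefended case, namely a contraction term from strong convexity, a stochastic-variance term, and a client-divergence term, plus new contributions from $e_{t}^{k}$. The perturbation is deterministic given the sampled batch, so it does not vanish under expectation; instead I would control the cross term $\langle \bar{\mW}_t-\mW^*,\, e_{t}^{k}\rangle$ by Young's inequality and absorb $\|e_{t}^{k}\|^2 \le \Lambda_{s}\epsilon$ into the variance-type constants. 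Concretely, the bounded-variance bound $\sigma_k^2$ of Assumption~\ref{asm:stochastic_gradients} is replaced by $\sigma_k^2 + \Lambda_{s}\epsilon$ and the bounded second moment $G^2$ of Assumption~\ref{asm:gradient_norm} is replaced by $G^2 + \Lambda_{s}\epsilon$; propagating these through the one-step descent lemma and through the accumulation of $I$ local updates (which is where the $(I-1)^2$ and $I^2/K$ factors arise) gives precisely the stated $Q$ and $C$.

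Finally I would assemble the one-round recursion $\E\|\bar{\mW}_{t+1}-\mW^*\|^2 \le (1-\eta_t\mu)\,\E\|\bar{\mW}_t-\mW^*\|^2 + \eta_t^2 (Q+C)$ and solve it by induction with the diminishing schedule $\eta_t = \frac{2}{\mu(\gamma+t)}$, obtaining $\E\|\bar{\mW}_T-\mW^*\|^2 = O\!\big(\frac{1}{\gamma+T}\big)$, then convert to the function-value gap via $L$-smoothness to reach the claimed bound. The main obstacle I anticipate is not the algebra of the recursion, which is inherited from \cite{li2019convergence}, but the careful handling of the deterministic bias $e_{t}^{k}$: unlike pure SGD noise it is not mean-zero, so I must verify that the cross terms stay dominated by the contraction and that the $\Lambda_{s}\epsilon$ terms accumulate with exactly the $(I-1)^2$ and $I^2$ weights across local steps rather than a looser factor, which is what makes the final constants in $Q$ and $C$ tight.
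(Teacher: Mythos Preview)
Your plan is essentially the paper's own proof: bound the gradient perturbation introduced by the defense, use it to inflate the constants in Assumptions~\ref{asm:stochastic_gradients} and~\ref{asm:gradient_norm} to $\sigma_k^2+\Lambda_s\epsilon$ and $G^2+\Lambda_s\epsilon$, and then invoke Theorem~2 of \cite{li2019convergence} verbatim. The paper is even terser than you---it does not re-run the recursion or worry about the non-mean-zero bias, it simply substitutes the enlarged constants into the cited theorem---so your extra care about cross terms and the virtual iterate is unnecessary but harmless.

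One genuine wrinkle: your derivation of the key estimate $\|e_t^k\|^2 \le \Lambda_s\epsilon$ from $\|r-r'\|_0\le\epsilon$ does not go through as written, since a bound on the number of zeroed coordinates says nothing about their $\ell_2$ mass. The paper handles this by invoking the representation constraint with $q=2$ (so $\|r-r'\|_2\le\epsilon$) in the convergence argument, after which $\|e_t^k\|_2 = \|\nabla_b F_k\|_2\,\|r'-r\|_2 \le \Lambda_s\epsilon$ follows directly from the rank-one structure and Assumption~\ref{asm:lamda}. With that single adjustment your outline coincides with the paper's proof.
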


\begin{proof}
See our proof in Appendix~\ref{app:cvg_proof}.
\end{proof}
\section{Experiments} \label{sec:experiments}

\subsection{Experimental Setup}
In our experiments, we evaluate our defense against two different model inversion attacks under non-IID settings. Experiments are conducted on a server with two Intel Xeon E5-2687W CPUs and four Nvidia TITAN RTX GPUs.

\paragraph{Attack methods.}
We evaluate our defense method against two model inversion attacks in FL.
\begin{itemize}
    \item \textbf{DLG attack~\cite{zhu2019deep}} assumes that a malicious server aims to reconstruct 
    devices' data using their uploaded gradients. In \textit{DLG} attack, the server optimizes reconstructed data  to minimize the Euclidean distance between the raw gradients and the gradients that are generated by the reconstructed data  in back propagation.
    \item \textbf{Gradient Similarity (GS) attack~\cite{geiping2020inverting}} shares the similar idea with \textit{DLG}. Different from using Euclidean distance in DLG, GS attack utilizes cosine similarity between the raw gradients and the dummy gradients to optimize the reconstructed data during local updates.
\end{itemize}

\paragraph{Defense baselines.}
We compare our proposed defense with two existing defense methods: gradient compression (GC)~\cite{zhu2019deep} and differential privacy (DP)~\cite{mcmahan2018learning}. 

\begin{itemize}
    \item \textbf{GC} prunes gradients that are below a threshold magnitude, such that only a part of local updates will be communicated between devices and the server.
    \item \textbf{DP} protects privacy with theoretical guarantee by injecting noise to the gradients uploaded to the server. In the experiments, we separately apply Gaussian and Laplacian noise to develop two DP baselines, i.e., DP-Gaussian and DP-Laplace. 
\end{itemize}

\paragraph{Datasets.}
To evaluate our defense under more realistic FL settings, we use MNIST and CIFAR10 datasets and construct non-IID datasets by following the configurations in~\cite{mcmahan2017communication}. For each dataset, the data is distributed across 100 devices. Each device holds 2 random classes of data with 100 samples per class. 
By default, we perform training on CIFAR10 and MNIST non-IID dataset with 1000 and 200 communication rounds, respectively.

\paragraph{Hyperparameter configurations.}
In training, we set local epoch $E$ as 1 and batch size $B$ as 32. We apply SGD optimizer and set the learning rate $\eta$ to 0.01. 
In each communication round, there are 10 devices which are randomly sampled  to participate in the training. 
For model inversion attacks, the ideal case for the adversary is that there is only one sample in each batch, where the quality of reconstructed data will be very high~\cite{zhu2019deep}. We evaluate our defense in such an extreme case, but it should show much better performance in other general cases (i.e., more than one sample in each batch).
With regard to \textit{DLG} attack, we apply $L-BFGS$ optimizer and conduct 300 iterations of optimization to reconstruct the raw data. For \textit{GS} attack, we utilize Adam optimizer with a learning rate of 0.1 and report the reconstructed results after 120 iterations. The base model architectures for two attacks are presented in Appendix~\ref{sec:setup}.
For defense, the configurations of our method and the compared baselines are displayed in Table~\ref{tb:cor_average}, where $p_{model}$ in \textit{GC} stands for the pruning rate of the local models' gradients, $p_{fc}$ of our method represents the pruning rate of the the fully connected layer's gradients. Regarding   DP-Gaussian and DP-Laplace, we set the mean and variance of the noise distribution as 0 and $\sigma$, respectively. 

\begin{table}[t]
    \centering
    \caption{Parameter configurations of different defense methods.}
        \begin{tabular}{l || c c}
            \toprule
            \textbf{Configured Parameters} & DLG & GS \\
            \hline
            GC: $p_{model}(\%)$ & [1, 80] & [1, 90] \\
            \hline
            DP-Gaussian: $\sigma_{Gaussian}$ & [$1e^{-4}$,$1e^{-1}$] & [$1e^{-4}$,$1e^{-1}$] \\
            \hline
            DP-Laplace: $\sigma_{Laplace}$ & [$1e^{-4}$,$1e^{-1}$] & [$1e^{-4}$,$1e^{-1}$] \\
            \hline
            Ours: $p_{fc}(\%)$ & [1, 40] & [1, 80] \\
            \bottomrule
        \end{tabular}
    \label{tb:cor_average}
\end{table}

\paragraph{Evaluation metrics.} 
\begin{itemize}
\item {\bf Privacy metric (MSE):} We use the mean-square-error (MSE) between the reconstructed image and raw image to quantify the effectiveness of defenses. A smaller MSE indicates a server privacy information leakage.
\item {\bf Utility metric (Accuracy):} We use the accuracy of the global model on the testing set to measure the effectiveness of FL algorithms (i.e., FedAvg~\cite{mcmahan2017communication}). A smaller accuracy means a less practical utility. 
\end{itemize}

\begin{figure*}[t]
\centering
     \includegraphics[scale=0.32]{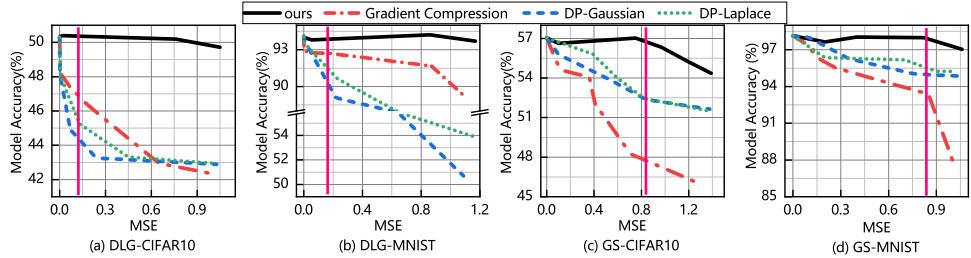}
\caption{Compared defenses on model accuracy and MSE between reconstructed image and raw image for different attack baselines and datasets. The {pink vertical line} is the boundary that the reconstructed image is unrecognizable by human eyes if MSE is higher.}
\label{fig:acc_mse}
\end{figure*}

\subsection{Defense Results: Utility-Privacy Tradeoff}

\begin{figure}[ht]
\centering
     \includegraphics[scale=0.35]{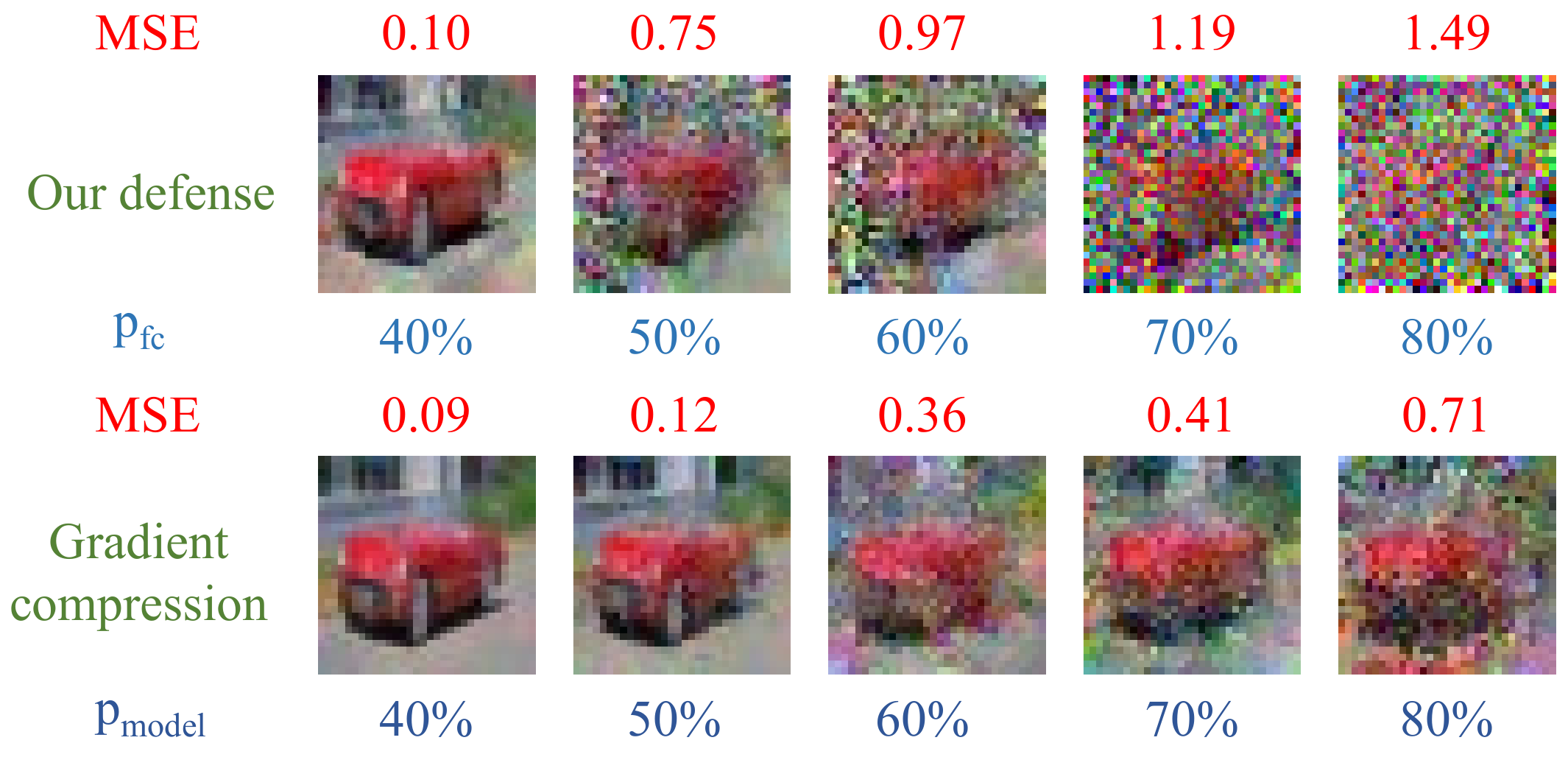}
\caption{Comparing our defense with GC under the same pruning rate on model accuracy and MSE on a random image in CIFAR10.}
\label{fig:GS_defense}
\end{figure}

We compare our defense with the baselines against the two attack methods in terms of model accuracy and MSE. Ideally, we want to maintain high model accuracy while achieving high MSE. The results are shown in \figref{fig:acc_mse}. 

We have the following two key observations. First, when achieving the MSE  such that the reconstructed image 
is not recognizable by humans, our method shows no drop in accuracy while the other baselines sacrifice as high as $6\%$ and  $9\%$ accuracy under the DLG and GS attacks, 
respectively. 

Second, without sacrificing accuracy, our defense can achieve 160x MSE than the baseline defenses. The accuracy can be maintained by our defense until MSE being 0.8, while the baselines show significant accuracy drop with a much smaller MSE. The reason is two folded: 1) our defense does not perturb parameters in the feature extractor (i.e., convolutional layers), which preserves the descriptive power of the model; and 2) the representations embedded in the gradients that are pruned by our defense are mostly inference-irrelevant, and hence pruning these parameters would be less harmful to the global model performance.

To perceptually demonstrate the effectiveness of our defense, we also visualize the reconstructed images. 
We compare our defense with GC, which is the defense baseline that also utilizes pruning. 
To save the space, we only show the results using the \textit{GS} attack but we have a similar observation in the \textit{DLG} attack. 
Figure~\ref{fig:GS_defense} shows the reconstructed image of a random sample in CIFAR10: 
the reconstructed image generated by our defense becomes unrecognizable when pruning only $50\%-60\%$ parameters in the FC layer. However, when applying the \textit{GC} defense, the reconstructed image is still recognizable even when $80\%$ parameters of the whole model are pruned. 
Note that being unrecognizable to humans is not the ultimate goal of defense, as the private information might still reside in the image though the image is not perceptually recognizable~\cite{ilyas2019adversarial}. 
Nonetheless, a MSE higher than the threshold that makes the image recognizable still serves as a meaningful indicator of privacy defense.

\subsection{Convergence Results}
Following the experimental setup in~\cite{li2019convergence}, we use a logistic regression (LR) to examine our convergence results on FedAvg using our defense. 
We distribute the MNIST dataset among $N=100$ devices in a non-IID setting where each device contains samples of 2 digits. 
Here, $\epsilon$ in Eq.~\ref{min_rep} is set to be 50, local batch size $B=32$, local epoch $E = \{5, 10\}$,  number of sampled devices $K$ in each communication round is selected from $\{5, 10\}$. 

Figure~\ref{fig:convergence} shows the results of loss vs. communication rounds. We observe that LR+FedAvg with our defense converges well, which validates our theoretical analysis.

\begin{figure}[t]
\centering
     \includegraphics[scale=0.32]{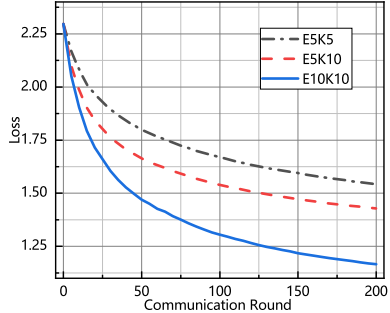}
\caption{Convergence of LR+FedAvg with our defense.}
\label{fig:convergence}
\end{figure}

\section{Conclusions and Future Work}
In this  work, we present our key observation that the data representation leakage from gradients is the essential cause of privacy leakage in FL. 
We also provide an analysis of this observation to explain how the data presentation is leaked. 
Based on this observation, we propose a defense against model inversion attack in FL. This is done by perturbing data representation such that the quality of the reconstructed data is severely degraded, while FL performance is maintained. In addition, we derive certified robustness guarantee to FL and convergence guarantee to FedAvg---the most popular FL algorithm, when applying our defense.
We conduct extensive experiments to evaluate the effectiveness of our defense, and the results demonstrate that our proposed defense can offer a much stronger privacy guarantee without sacrificing accuracy compared with baseline defenses.

Our further research include: 1) Investigating the impact of various $p$-norm and $q$-norm on both defense and accuracy, as well as designing norms that consider structural information in the data;  
2) Extending our analysis of data representation leakage to other types of layers, e.g., convolutional layer, to have a more comprehensive understanding of privacy leakage in FL.

\bibliographystyle{IEEEtran}
\bibliography{reference}

\clearpage

\appendix

\section{Method of Inferring Data Representations}\label{sec:method}

As discussed in Section~\ref{sec:leakage}, if we can find several rows in the local update $\mW$ that is from $Grad({\sB}_i)$, which is possible because of the low entanglement of $Grad({\sB}_i)$ across $i$ in FL, then we are able to infer this device's training data representation of class $i$ in this layer. As $\frac{\partial l^{j}}{\partial \vb^{j}}$ and $({\vr^{j}})^{\mathrm{T}}$ are both similar across $j$ in one batch $\sB_i$, $Grad({\sB}_i)$ can be approximated as \eqref{eq:gradB_approx}, 
{\small
\begin{equation}
\begin{aligned}
Grad({\sB}_i) & = \frac{1}{\left|\sB_i\right|} \sum_{j \in \sB_i} \frac{\partial l^{j}}{\partial \vb^{j}} {\vr^{j}}^{\mathrm{T}} \\
& \approx \left( \frac{1}{\left|\sB_i\right|} \sum_{j \in \sB_i} \frac{\partial l^{j}}{\partial \vb^{j}} \right) \left( \frac{1}{\left|\sB_i\right|} \sum_{j \in \sB_i} {\vr^{j}}^{\mathrm{T}} \right) \\
& = \overline{\frac{\partial l}{\partial \vb}}_{\sB_i} \overline{\vr}_{\sB_i}^{\mathrm{T}},
\end{aligned}
\label{eq:gradB_approx}
\end{equation}
}%
where $\overline{\frac{\partial l}{\partial \vb}}_{\sB_i}$ and $\overline{\vr}_{\sB_i}^{\mathrm{T}}$ denote the average of  $\frac{\partial l^j}{\partial \vb^j}$ and $({\vr^j})^{\mathrm{T}}$ for $j \in \sB_i$, and $\overline{\vr}_{\sB_i}^{\mathrm{T}}$ is the data representation corresponding to this device's training data of class $i$ in this layer. If we want to infer $\overline{\vr}_{\sB_i}^{\mathrm{T}}$ from this layer's local parameter update, we need to seek out the 
unique elements in $\overline{\frac{\partial l}{\partial \vb}}_{\sB_i}$. Here, unique elements are the elements in $\overline{\frac{\partial l}{\partial \vb}}_{\sB_i}$ that are not, or less entangled with other $\overline{\frac{\partial l}{\partial \vb}}_{\sB_{j, j \neq i}}$ after summation in \eqref{eq:fc_grad_B} is executed.

\subsection{Inferring features in the last layer}

Let us consider the last layer of a classification model with cross-entropy loss over a sample. Suppose $\vr$ is the data representation of the second-to-layer layer, we have 
\begin{equation}
\begin{aligned}
\vb & = \mW \vr \\
\vy & = softmax(\vb)  \\
l & = - \log \vy_c,
\end{aligned}
\label{eq:last_layer}
\end{equation}
where $l$ is the loss defined on a sample and $c$ is the sample's  ground-truth label. 
$\vy = [ y_1, y_2, \dots, y_C]$ denotes the output of the $softmax$. 
Then $\frac{\partial l}{\partial \vb}$ in this layer is 
\begin{equation}
\begin{aligned}
{\frac{\partial l}{\partial \vb}}_i = 
\begin{cases}
y_i - 1, & i = c \\
y_i, & i \neq c 
\end{cases}
\end{aligned}
\label{eq:last_layer}
\end{equation}

As $y_1, y_2, \dots y_C$ are probabilities, we have $y_i \in (0, 1)$ and $\sum_i y_i = 1$. Hence, $\frac{\partial l}{\partial \vb}$ has only one negative element on index $c$ and the absolute value of $\frac{\partial l_c}{\partial \vb}$ is equal to the sum of other elements' absolute values. Therefore, for the last layer, the unique element in $\overline{\frac{\partial l}{\partial \vb}}_{\sB_i}$ is the "peak" element with index $i$, and this "peak" element contributes to the larger $||\nabla \mW_i||_2$, where $\nabla \mW_i$ denotes the $i^{th}$ row of $\nabla \mW$.

When the malicious server receives one local model updates, it computes $\{||\nabla \mW_1||_2,\allowbreak ||\nabla \mW_2||_2, \dots, ||\nabla \mW_C||_2\}$ and picks out the ones that are significantly larger. Then the server successfully infer data classes on this device because these selected rows' indexes corresponds to this device's training data classes. For one training class $i$, $\overline{\vr}_{\sB_i}^{\mathrm{T}}$ in this layer can just be approximated by $\gamma \nabla \mW_i$, where $\gamma$ is a scale influences by the local training steps. The algorithm of inferring data representations in the last layer is shown in Algorithm~\ref{alg:inferion_last_layer}.

\subsection{Inferring features in previous layers}

\begin{figure}[ht]
\centering
     \includegraphics[scale=0.5]{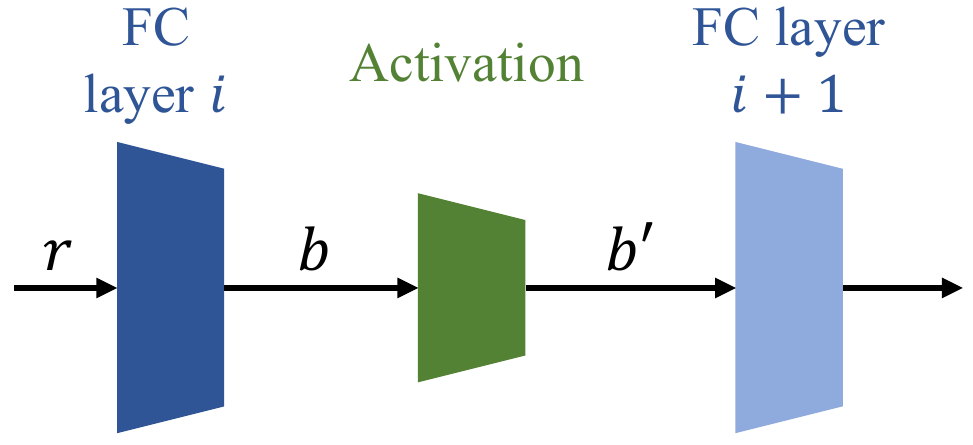}
\caption{Inference process.}
\label{fig:b_hat}
\end{figure}

Generally, we need to seek out the unique elements in $\overline{\frac{\partial l}{\partial \vb}}_{\sB_i}$ to infer $\overline{\vr}_{\sB_i}^{\mathrm{T}}$ in this layer. Let us assume we have inferred the data representation of $\sB_i$ in the layer after, which is denoted as $\overline{\vb'}_{\sB_i}$ shown in \figref{fig:b_hat}. Specifically, $\overline{\vb'}_{\sB_i}$ is the result of activation function with input as $\overline{\vb}_{\sB_i}$. If we can infer $\overline{\vr}_{\sB_i}^{\mathrm{T}}$ based on the access of $\overline{\vb'}_{\sB_i}$, plus the inferred last layer's data representation of $\sB_i$, then we can infer $\sB_i$'s data representations of every linear layer in a backpropagation fashion. 

Even though $\overline{\vb'}_{\sB_i}$ is a nonlinear transformation of $\overline{\vb}_{\sB_i}$, they share the similar structure and sparsity due to the consistency of most activation functions. Hence we can apply $\overline{\frac{\partial l}{\partial \vb'}}_{\sB_i}$ to approximate $\overline{\frac{\partial l}{\partial \vb}}_{\sB_i}$ for seeking the unique elements in $\overline{\frac{\partial l}{\partial \vb}}_{\sB_i}$. Theoretically, $\overline{\frac{\partial l}{\partial \vb'}}_{\sB_i}$ corresponds to the direction of $\nabla {\vb'}_{\sB_i}$. Because ${\vb'}_{\sB_i}$ should retain stable structure and sparsity in one local updating round as discussed in Section~\ref{sec:leakage}, $\nabla {\vb'}_{\sB_i}$ should mostly appear on the elements with larger magnitude. Therefore, the unique elements in $\overline{\frac{\partial l}{\partial \vb'}}_{\sB_i}$ should have the same indexes with the elements with larger magnitude in ${\vb'}_{\sB_i}$. Since we have access to ${\vb'}_{\sB_i}$, we can find $M$ most unique elements in $\overline{\frac{\partial l}{\partial \vb'}}_{\sB_i}$ by listing the $M$ elements in ${\vb'}_{\sB_i}$ with the largest magnitude. Then we can infer $\overline{\vr}_{\sB_i}^{\mathrm{T}}$ easily by fetching and averaging the rows of this layer's weight updates according to the $M$ unique elements indexes.

Following the above algorithm, the malicious server can fetch the training data representation in a fully connected layer for each data class on one device based on the data representation in the layer after. Plus the inference of all classes' training data representations in the last layer, the server is able to infer one device's training data representations for each class it owns in every fully connected layers in a back propagation way. The inferring process is shown in Algorithm~\ref{alg:inferion_previous_layer}.

\begin{algorithm}[t] 
\footnotesize
    \renewcommand{\algorithmicrequire}{\textbf{Input:}}
    \renewcommand{\algorithmicensure}{\textbf{Output:}}
    \caption{Data representation inference in the last layer.}  
    \label{alg:inferion_last_layer}  
    \begin{algorithmic}[1]
        \Require Local weight updates in the last layer $\nabla \mW$.
        \Ensure Local training class set $\sS$; Linearly scale training data representations $\{\hat{\vr}_{\sB_i}^{\mathrm{T}}, i\in \sS\}$ in this layer.
        \State Compute $||\nabla {\mW}_1||_2, ||\nabla {\mW}_2||_2, \dots, ||\nabla {\mW}_C||_2$;
        \State Pick up peaks of $\{||\nabla {\mW}_i||_2\}$ and collect their indexes as $\sS$;
        \State \Return $\sS$, $\{\nabla {\mW}_i, i\in \sS\}$;
    \end{algorithmic}
\end{algorithm}

\begin{algorithm}[t] 
\footnotesize
    \renewcommand{\algorithmicrequire}{\textbf{Input:}}
    \renewcommand{\algorithmicensure}{\textbf{Output:}}
    \caption{Data representation inference in previous fully connected layers.}  
    \label{alg:inferion_previous_layer}  
    \begin{algorithmic}[1]
        \Require Local weight updates in this layer $\nabla \mW$; Data representation $\overline{\vb'}_{\sB_i}$ for ${\sB_i}$ in the following layer; $M\in \sN+$.
        \Ensure Linearly scale training data representations $\hat{\vr}_{\sB_i}^{\mathrm{T}}$ in this layer.
        \State Select $M$ elements in $\overline{\vb'}_{\sB_i}$ with the largest magnitudes and collect their indexes as ${\mathbb{M}}$;
        \State $\hat{\vr}_{\sB_i}^{\mathrm{T}} \leftarrow \sum\limits_{k\in \mathbb{M}} \nabla {\mW}_k$;
        \State \Return $\hat{\vr}_{\sB_i}^{\mathrm{T}}$;
    \end{algorithmic}
\end{algorithm}

\section{Experiment Setup}\label{sec:setup}

\paragraph{Model for experiments in Section~\ref{sec:inferring representations}.}
For the inferring class-wise data representation experiment, we use the base model with 2 convolutional layers and 3 fully connected layers. The detailed architecture is listed as \textit{Conv3-6}$\rightarrow$\textit{Maxpool}$\rightarrow$\textit{Conv6-16}$\rightarrow$\textit{Maxpool}$\rightarrow$\textit{FC--120}$\rightarrow$\textit{FC--84}$\rightarrow$\textit{FC--10}. We set kernel size as 5 and 2 for all convolutional layers and max pooling layers respectively.

\paragraph{Settings for experiments in Section~\ref{sec:unveiling representation leakage}.}
For experiments unveiling representation leakage in Section~ \ref{sec:unveiling representation leakage}, we build a model with one convolutional layer and one fully connected layer. The detailed architecture is listed as \textit{Conv3-12}$\rightarrow$\textit{FC--10}. We set kernel size of the convolutional layer as 5. For attacks, we apply the \emph{L-BFGS} optimizer and conduct 300 iterations of optimization to reconstruct the raw data.

\paragraph{Models for two attacks in Section~\ref{sec:experiments}}
We use \textit{LeNet} for both the \textit{DLG} attack and \textit{ConvNet} for \textit{GS} attack. The architectures are shown in Table~\ref{tab:arch_attack}.

{\small
\linespread{0.85}
\begin{table}[ht]
    \centering
    \caption{Model architectures for \textit{DLG} attack and \textit{GS} attack.}
    \begin{tabular}{c||c}
    \toprule
      \textbf{\textit{DLG}} & \textbf{\textit{GS}} \\
      \hline
      5$\times$ 5 Conv 3-12   & 5$\times$ 5 Conv 3-32\\
      5$\times$ 5 Conv 12-12 & 5$\times$ 5 Conv 32-64 \\
      5$\times$ 5 Conv 12-12 & 5$\times$ 5 Conv 64-64\\
      5$\times$ 5 Conv 12-12 & 5$\times$ 5 Conv 64-128 \\
      FC--10 & 5$\times$ 5 Conv 128-128\\
      & 5$\times$ 5 Conv 128-128\\
      & 3$\times$ 3 Maxpool\\
      & 5$\times$ 5 Conv 128-128\\
      & 5$\times$ 5 Conv 128-128\\
      & 5$\times$ 5 Conv 128-128\\
      & 3$\times$ 3 Maxpool\\
      & FC--10\\
      \bottomrule
    \end{tabular}

    \label{tab:arch_attack}
\end{table}
}




\section{Proof of Theorem~\ref{thm:bound}}
\label{app:bound}

\newtheorem{proposition}{Proposition}
\newtheorem{assumption_p_norm}{Proposition}
\begin{proposition}
\label{asm:p_norm_compatible}
Let $||\cdot||_{p}$ be a sub-multiplicative norm. 
$||AB||_p \leq ||A||_p ||B||_p$. 
\end{proposition}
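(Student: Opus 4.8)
The plan is to read $||\cdot||_p$ as the induced (operator) matrix $p$-norm --- the object that actually appears in Theorem~\ref{thm:bound} when the Jacobian $\nabla_X f$ multiplies the representation difference $r - r'$ --- and to establish its sub-multiplicativity directly from the definition. (Read literally, with ``sub-multiplicative norm'' taken as a standing hypothesis, the inequality is just the definition; the content worth verifying is the claim below for induced norms.) First I would recall the defining formula $||A||_p = \sup_{\vx \neq \vzero} \frac{||A\vx||_p}{||\vx||_p}$, from which the compatibility bound $||A\vx||_p \leq ||A||_p \, ||\vx||_p$ follows for every vector $\vx$ immediately from the definition of the supremum.

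The core of the argument is then two applications of this compatibility bound. For an arbitrary nonzero $\vx$, I would write $||AB\vx||_p = ||A(B\vx)||_p \leq ||A||_p \, ||B\vx||_p \leq ||A||_p \, ||B||_p \, ||\vx||_p$, applying compatibility first to the vector $B\vx$ and then to $\vx$. Dividing by $||\vx||_p$ and taking the supremum over all $\vx \neq \vzero$ yields $||AB||_p \leq ||A||_p \, ||B||_p$, as claimed. This is precisely the inequality that lets the proof of Theorem~\ref{thm:bound} pass from $||r - r'||_p = ||(\nabla_X f)(X - X')||_p$ to $||r - r'||_p \leq ||\nabla_X f||_p \, ||X - X'||_p$, after which rearranging gives the certified bound.

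I do not expect a genuine obstacle here, since this is the textbook sub-multiplicativity of induced matrix norms, and the entire argument reduces to two compatibility inequalities followed by a single supremum. The only points that need a word of care are bookkeeping rather than mathematics: ensuring the shapes of $A$ and $B$ are compatible so that $AB$ and $B\vx$ are well-defined, and noting that the supremum defining $||A||_p$ is finite for the finite-dimensional Jacobian $\nabla_X f$ whose invertibility is guaranteed by Assumption~\ref{asm:f_inverse}. Both hold automatically in the setting of Theorem~\ref{thm:bound}.
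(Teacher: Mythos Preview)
Your proposal is correct, and in fact supplies more than the paper does. In the paper this proposition is stated without proof: it is effectively a definition --- ``let $\|\cdot\|_p$ be sub-multiplicative'' is taken as a hypothesis, and the inequality $\|AB\|_p \le \|A\|_p\|B\|_p$ is simply the content of that hypothesis, invoked immediately afterward in the form $\|C^{-1}D\|_p \ge \|D\|_p/\|C\|_p$ to obtain the bound in Theorem~\ref{thm:bound}. You correctly flag this tautological reading and then go further, identifying the intended object as the induced operator $p$-norm and giving the standard two-line argument (compatibility $\|A\vx\|_p \le \|A\|_p\|\vx\|_p$ applied twice, then supremum) that this norm is indeed sub-multiplicative. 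That is exactly the substantive fact the application requires, so your version is strictly more informative than the paper's treatment while remaining fully aligned with how the proposition is used downstream.
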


With Assumption \ref{asm:f_inverse} and 
Lemma \ref{lemma:f_inverse}, the distance between $X$ and $X'$ is:
\begin{equation}
\begin{aligned}
||X - X'||_{p} &= || f^{-1}(r)-f^{-1}(r') ||_{p} \\
&= ||\nabla_r f^{-1} \cdot (r-r')||_{p} \\
&= ||{(\nabla_X f)}^{-1} \cdot (r-r')||_{p}
\end{aligned}
\end{equation}

Based on Proposition~\ref{asm:p_norm_compatible}, 
we have $||C^{-1} D||_p \geq ||D||_p / ||C||_p$.
Then, $||X - X'||_{p}$ is lower bounded as
\begin{equation}
\begin{aligned}
||X - X'||_{p} \geq \frac{||r-r'||_{p}}{||\nabla_X f||_{p}}.
\label{eq:defense_lower_bound}
\end{aligned}
\end{equation}

\section{Proof of Theorem~\ref{thm:convergence}}
\label{app:cvg_proof}

\noindent {\bf Overview:} Our proof is mainly inspired by~\cite{li2019convergence}. Specifically, our proof has two key parts. First, we derive the bounds similar to those in Assumptions~\ref{asm:stochastic_gradients} and~\ref{asm:gradient_norm}, after applying our defense scheme. 
Second, we adapt {\bf Theorem 2} on convergence guarantee in~\cite{li2019convergence} using our new bounds.   

\noindent {\bf Bounding the expected distance between the perturbed gradients with our defense and raw gradients using Assumption~\ref{asm:lamda}.}
In FedAvg, in the $t$-th round, we denote the input representation, parameters, and output of the single $s$-th 
layer in the $k$-th device as $r^k_t$, $\vw_{s_t}^{k}$, and $b^k_t$, respectively.  
Via applying our defense scheme $\mathcal{T}(\cdot)$, the input representation is perturbed as ${r'}^k_t$.
Then, the expected distance between the perturbed gradients and raw gradients in the $s$-th layer is bounded by:
\begin{align}
&\mathbb{E} ||\nabla F'_{k}({\vw_{s}}_{t}^{k},{\xi}_{t}^{k}) - \nabla F_{k}({\vw_{s}}_{t}^{k},{\xi}_{t}^{k})||_2 \\
=&\mathbb{E}||\nabla_{b_{t}^{k}} F_{k}({\vw_{s}}_{t}^{k},{\xi}_{t}^{k}) \cdot ({r'}_{t}^{k} - r_{t}^{k})^T ||_2 \\
\leq&\mathbb{E}||\nabla_{b_{t}^{k}} F_{k}({\vw_{s}}_{t}^{k},{\xi}_{t}^{k})||_2  \cdot ||({r'}_{t}^{k} - r_{t}^{k}) ||_2 \\
\leq& \Lambda_{s} \cdot \epsilon  \label{norm_assm_9},
\end{align}
where in Eq.~\ref{norm_assm_9} we use the the constraint in Eq.~\ref{eq:min_object} by setting $q=2$ and Assumption \ref{asm:lamda}.

\noindent {\bf New bounds for Assumption~\ref{asm:stochastic_gradients} with our defense.}
Note that our defense scheme is only applied to the $s$-th 
layer. 
Then, the distance between the perturbed gradients $\nabla F'_{k}({\mW}_{t}^{k},{\xi}_{t}^{k})$ and the raw gradients $\nabla F_{k}({\mW}_{t}^{k},{\xi}_{t}^{k})$  of the whole model is the same as that of the $s$-th
layer. 
Thus, 
\begin{align}
&\mathbb{E} ||\nabla F'_{k}({\mW}_{t}^{k},{\xi}_{t}^{k}) - \nabla F_{k}({\mW}_{t}^{k},{\xi}_{t}^{k})||_2 \\
= & \mathbb{E}||\nabla F'_{k}({\vw_{s}}_{t}^{k},{\xi}_{t}^{k}) - \nabla F_{k}({\vw_{s}}_{t}^{k},{\xi}_{t}^{k}) ||_2 \\
\leq& \Lambda_{s} \cdot \epsilon. \label{eqn:bound_fc}
\end{align}



Next, we use the norm triangle inequality to bound he variance of stochastic gradients in each device, and we have 
\begin{align}
&\mathbb{E} ||\nabla F'_{k}(\mW_{t}^{k},{\xi}_{t}^{k}) - \nabla F_{k}(\mW_{t}^{k}) ||^{2} \\
\leq &\mathbb{E} ||\nabla F'_{k}(\mW_{t}^{k},{\xi}_{t}^{k}) - \nabla F_{k}(\mW_{t}^{k},{\xi}_{t}^{k}) ||^{2} \\
&+ \mathbb{E} ||\nabla F_{k}(\mW_{t}^{k},{\xi}_{t}^{k}) - \nabla F_{k}(\mW_{t}^{k}) ||^{2} \\
\leq& \Lambda_{s} \cdot \epsilon + \sigma_k^2 \label{eqn:assm_5}, 
\end{align}
where we use Assumption \ref{asm:stochastic_gradients} and Eq.~\ref{eqn:bound_fc} in Eq.~\ref{eqn:assm_5}.

\noindent {\bf New bounds for Assumption~\ref{asm:gradient_norm} with our defense.}
The expected squared norm of stochastic gradients $\nabla F'_{k}(\mW_{t}^{k},{\xi}_{t}^{k})$ with our defense is as follows:
\begin{align}
&\mathbb{E} ||\nabla F'_{k}(\mW_{t}^{k},{\xi}_{t}^{k})||^{2} \\
\leq &\mathbb{E} ||\nabla F'_{k}(\mW_{t}^{k},{\xi}_{t}^{k}) - \nabla F_{k}(\mW_{t}^{k},{\xi}_{t}^{k}) ||^{2} \\
&+ \mathbb{E} ||\nabla F_{k}(\mW_{t}^{k},{\xi}_{t}^{k}) ||^{2} \\
\leq& \Lambda_{s} \cdot \epsilon + G^2, \label{eqn:assum_6}
\end{align}
where we use Assumption~\ref{asm:gradient_norm} and and Eq.~\ref{eqn:bound_fc} in Eq.~\ref{eqn:assum_6}. 

\noindent {\bf Convergence guarantee for FedAvg with our defense.} 
We define $F^*$ and $F_k^*$ as the minimum value of $F$ and $F_k$ and let $\Gamma = F^* -  \sum\limits_{k=1}^{N} p_{k} F_{k}^{*}$. 
We assume each device has $I$ local updates and the total number of iterations is $T$. 
Let Assumptions \ref{asm:L-smooth} to \ref{asm:lamda} hold and $L, \mu, \sigma_k, G, \Lambda_{s}$ be defined therein. 
Choose $\kappa = \frac{L}{\mu}$, $\gamma = \max \{8\kappa, I \}$ , the learning rate $\eta_t = \frac{2}{\mu(\gamma + t)}$. 
By applying our new bounds and {\bf Theorem 2} in~\cite{li2019convergence},  
FedAvg using our defense has the following convergence guarantee:
{\small
\begin{equation}
\mathbb{E}[F(\mW_T)] - F^* \leq \frac{2\kappa}{\gamma + T}(\frac{Q+ C}{\mu} + \frac{\mu\gamma}{2} \mathbb{E}||\mW_0 - {\mW}^*||^2),
\end{equation}
}%
where
{\small
\begin{align*}
& Q = \sum\limits_{k=1}^{N} p_k^2 (\Lambda_{s} \cdot \epsilon + \sigma_k^2) + 6L\Gamma + 8(I-1)^2(\Lambda_{s} \cdot \epsilon+G^2) \\
& C = \frac{4}{K}{I}^2(\Lambda_{s} \cdot \epsilon+G^2). 
\end{align*}
}%

\end{document}